\def\quickop#1{\expandafter\DeclareMathOperator\csname
#1\endcsname{#1}}
\newcommand{\GWREC}{\texttt{MREC}}
\newcommand{\GW}{\mathrm{GW}}
\newcommand{\R}{\mathbb{R}}
\newcommand{\Prob}{\mathbb{P}}
\newcommand{\bC}{{\mathbb{C}}}
\newcommand{\bR}{{\mathbb{R}}}
\newcommand{\bM}{{\mathbb{M}}}
\newtheorem{thm}{Theorem}[section]
\newtheorem{cor}[thm]{Corollary}
\newtheorem{prop}[thm]{Proposition}
\newtheorem{defin}[thm]{Definition}
\begin{document}

\title[MREC: framework for matching point clouds with applications to single-cell]{MREC: a fast and versatile framework for aligning and matching point clouds with applications to single cell molecular data}
\author[A.J. Blumberg]{Andrew J. Blumberg}
\author[M. Carri\`ere]{Mathieu Carri\`ere}
\author[M.A. Mandell]{Michael A. Mandell}
\author[R. Rabad\'an]{Ra\'ul Rabad\'an}
\author[S. Villar]{Soledad Villar}

\begin{abstract}
Comparing and aligning large datasets is a pervasive problem occurring across many different knowledge domains.
We introduce and study \GWREC, a recursive decomposition algorithm for
computing matchings between datasets.  The basic idea is to partition
the data, match the partitions, and then recursively match the points
within each pair of identified partitions.  The matching itself is
done using black box matching procedures that are too expensive to run
on the entire dataset.  Using an absolute measure of the quality of a
matching, the framework supports optimization over parameters
including partitioning procedures and matching algorithms.  By design,
\GWREC\ can be applied to extremely large datasets.  We analyze the
procedure to describe when we can expect it to work well and
demonstrate its flexibility and power by applying it to a number of
alignment problems arising in the analysis of single cell molecular data.
\end{abstract}
\maketitle

\section{Introduction}

Computing alignments, or matchings, between two point clouds is a
basic problem in data analysis.  Notable applications include aligning
points between images (sometimes referred to as the ``point
registration" problem)~\cite{kimmel03, memoli05}, measuring distances
between images or histograms~\cite{Rubner2000}, finding independent
edge sets in bipartite graphs~\cite{Bougleux2017}, matching the shape
of primate teeth to determine species
relationships~\cite{boyer2011algorithms}, removing batch effect in
single-cell data~\cite{Forrow2019}, and studying stem cell
differentiation~\cite{Forrow2019, Schiebinger2019}. In the setting
where the point clouds correspond to 3D shapes or low-dimensional
manifolds, there exist techniques that successfully exploit the
geometric structure of the point clouds~\cite{lipman2011conformal,
  gao2018development, bajaj2018smac}. In this work, we focus on
arbitrary metric spaces, where the matching problem is typically
formalized as an optimization problem: for each matching of datasets
$X$ and $Y$, by which we mean a pair of maps $f \colon X \to Y$
and $g \colon Y \to X$, we assign a cost $c(f,g)$.  The goal is then
to find a matching that minimizes the cost.  The cost of the optimal
matching provides a measure of some kind of distance between $X$ and $Y$;
in this formulation, there is a tight connection between matchings and
distances between datasets.

{\bf Matchings and distances between metric spaces:}
A standard way of assigning costs to the matching is to assume that
the two point clouds $X$ and $Y$ are equipped with distance functions
$\partial_X$ and $\partial_Y$ respectively, i.e., they are metric spaces.  Then the
cost of a matching can be defined in terms of relationships between
the interpoint distances.  This approach to matching is closely
related to notions of distances between metric spaces known as the
{\em Hausdorff} and {\em Gromov-Hausdorff} distances~\cite{gromov81, gromov}.  A distinct
advantage of cost measures that depend only on the distance matrices
for each dataset is that the resulting alignment procedures are
invariant with respect to rotations and more generally arbitrary
distance-preserving transformations of the point clouds.  This is
clearly an essential feature when working with images, for example.
Unfortunately, computing these distances in practice is infeasible.

A closely related viewpoint is to study matching questions from the
perspective of ``earthmover distances'', which arise from an area of
mathematics known as {\em optimal transport}~\cite{Villani2008}.  Here
the idea is that matchings can be probabilistic or fuzzy, which is
encoded by representing the datasets as having probability
distributions.  The costs of the matchings are then computed by adding up
the amounts of probability mass required to transform one distribution
into another, weighted by the distances.  The resulting distances
between metric spaces are known as the {\em Wasserstein} and {\em
  Gromov-Wasserstein} distance~\cite{memoli11,Sturm}.  This approach is related to the
approach above insofar as the Gromov-Wasserstein distance is a
relaxation of the Gromov-Hausdorff distance.  Although there is a
substantial and beautiful theoretical literature on optimal transport,
the Gromov-Wasserstein distance has until recently been too expensive
to use in practice, limited to at most a few hundred points.  A
related relaxation in terms of semidefinite programming yields a
distance and matching procedure that has attractive theoretical
properties but is also expensive to use in practice~\cite{Rabin2011,
  Shirdhonkar2008, kezurer2015tight, Villar2016}; again, it is limited to hundreds of
points.

Quite recently, there has been exciting progress in the area based on
a relaxation of the Gromov-Wasserstein distance that involves
regularizing the optimal transport problem by adding an entropy
term~\cite{Cuturi2013}.  Roughly speaking, this adds a penalty based
on the complexity of the matching.  This refinement to the 
minimization objective (see Section~\ref{subsec:GW}) turns the problem
into one that can be solved comparatively efficiently with
Sinkhorn's fixed-point algorithm~\cite{Sinkhorn1974}. However,
applying this approximation to datasets with more than a few thousand
points is only feasible in low dimensions.

The limitations on the applicable size of
datasets is a significant issue for many potential applications, including
single-cell characterization of biological systems.  Recent technological breakthroughs have allowed 
characterization of gene expression, chromatin accessibility, and structure at the single-cell level, allowing
biologists to generate datasets with tens of thousands of cells (data points) in a very high-dimensional space (each dimension represents a gene of
interest). These technical limitations are unfortunate since optimal transport has been shown 
in recent work to have significant potential applications,
e.g., for removing batch effects~\cite{Forrow2019}, or for a better
understanding of the temporal evolution and differentiation in development~\cite{Schiebinger2019}. 

{\bf Contributions.}
Our goal in this article is to introduce and study a general framework
for applying matching algorithms to very large datasets by {\em
  recursive decomposition}.  The basic idea is to partition the dataset 
  into (possibly overlapping) subsets, match representative points
from the subsets using any matching algorithm as a subroutine, and
then recursively match the partitions.  We refer to the recursive
approximation scheme as \GWREC\ (see Algorithm~\ref{alg:gwrec}).  By
selecting parameters appropriately, we can work with datasets of
tens of thousands of points.  A particularly
attractive features of this framework is that, since there is an
external measure of the quality of a matching, we can efficiently
search over the parameter space, including selection of matching
algorithms as well as methods for partitioning. 

After introducing the algorithm, we do a simple theoretical analysis
that explains situations when we can guarantee that this approximation
will work well (see Section~\ref{sec:theo}).  The conclusions are
reassuring but unsurprising: the algorithm can be expected
to work well either when the matching itself admits a recursive
decomposition (e.g., the data is produced by well-separated Gaussians)
or the partitions produce an approximation that is close to the
original metric space.

Finally, we explore a number of applications of \GWREC\  to datasets coming from single-cell characterization of different biological systems, showcasing the usefulness of \GWREC\ for
various problems in this field (see Section~\ref{sec:appli}).  The
results we obtain outperform or are comparable to alternative
algorithms on small datasets, and produce interesting results on datasets 
too large to be amenable to any other techniques.

\section{Matchings and Gromov-type distances}
\label{sec:GH}

In this section, we recall the definitions of matchings and various
matching distances between metric spaces. We first review the
Gromov-Hausdorff distance in Section~\ref{subsec:GH}.  We present
its probabilistic (and entropy-regularized) relaxation, the
Gromov-Wasserstein distance, in Section~\ref{subsec:GW}, and we
discuss the semidefinite relaxation in Section~\ref{subsec:SDP}.

{\bf Problem.} We can formulate the basic problem of interest as
follows.  We are given two datasets $X$ and $Y$, each equipped with a
distance function.  That is, we have functions $\partial_X \colon X
\times X \to \bR$ and $\partial_Y \colon Y \times Y \to \bR$ that
satisfy the metric axioms. 
The problem is to find a matching that best preserves the distances.
There are of course many ways to make this precise, depending on the
notion of ``best''. Our point of departure is the solution that comes
from the idea of the Gromov-Hausdorff distance~\cite{gromov81, gromov}. 

\subsection{The Gromov-Hausdorff distance}
\label{subsec:GH}

The Gromov-Hausdorff distance is a metric on the set of compact metric
spaces up to isometry, which generalizes the Hausdorff distance.

\begin{defin} Let $A$ and $B$ be subsets of a metric space $X$.  The Hausdorff distance between $A$ and $B$ is defined as
$d_{H}(A,B) = \max\left\{\max_a d(a,B), \max_b d(b,A) \right\}$,
where $d(a,B)=\min_b d(a,b)$ and $d(b,A)=\min_a d(a,b)$.
\end{defin}

The Hausdorff distance depends on the embedding of $A$ and $B$, and
for instance is not translation or rotation invariant for $A$ and $B$
separately.  As a consequence, it is useful to consider a refinement
which is invariant to distance-preserving transformations. \\

{\bf Correspondences.}  Let $R$ be a correspondence between
$X$ and $Y$, i.e., a subset $R \subset X \times Y$ such that the
projections are surjective onto $X$ and $Y$.  A correspondence is a
kind of matching where multiple points can be associated with each
other.  Then, the {\em distortion} of $R$ is defined to be: 
$\dis(R) = \sup |\partial_X(x,x') - \partial_Y(y,y')|$,
%
where the supremum is over all $(x,y) \in R$ and $(x',y') \in R$.
The distortion can be thought of as the cost of the correspondence:
the quality of the matching is assessed by how well distances between
matched points are preserved. \\ 

{\bf Gromov-Hausdorff distance.} 
The Gromov-Hausdorff distance is defined as:
\[
d_{GH}((X,\partial_X), (Y, \partial_Y)) = \frac{1}{2} \inf_{R} \dis(R),
\]
where the infimum varies over all correspondences.  That is, in
principle, if we compute the Gromov-Hausdorff distance, we also obtain
a correspondence which realizes it. 
Unfortunately, it is infeasible to directly compute the
Gromov-Hausdorff distance.  The optimization problem above is an
example of an integer programming problem, and is well-known to be
NP-hard, even for linear spaces such as metric
trees~\cite{Agarwal2018}.  Instead, a standard approach is to turn to
various kinds of relaxations.

\subsection{The Gromov-Wasserstein distance}
\label{subsec:GW}

A first kind of relaxation is to allow probabilistic matchings; each
point $x \in X$ is matched to a point $y \in Y$ with a probability
$p_{xy}$, subject to the constraint that $\sum_{y_i \in Y} p_{x y_i} =
1$.  It turns out that a way to study this is via the
consideration of metric spaces equipped with a probability
measure. \\

{\bf Metric probability space.} We will temporarily broaden
consideration from metric spaces to {\em metric measure spaces}; these
are metric spaces $(X, \partial_X)$ equipped with a Borel measure
$\Prob_X$.  In fact, we restrict attention to {\em metric probability
  spaces}, i.e., metric measure spaces where the total measure is $1$.
Any metric space can be regarded as a metric measure space equipped
with the uniform measure, where each point is assigned equal
probability.   To construct a metric in this setting which is analogous to the Gromov-Hausdorff distance, we need to introduce a
probabilistic analogue of a correspondence.  The correct notion here
is that of a {\em coupling} of $\Prob_X$ and $\Prob_Y$; this is a
probability measure $\Prob$ on $X \times Y$ such that $\Prob(\cdot, Y)
= \Prob_X$ and $\Prob(X,\cdot) = \Prob_Y$.  We can now define the
Gromov-Wasserstein distance~\cite{memoli11, Sturm, LottVillani}. \\

{\bf Gromov-Wasserstein distance.} 
Let $\Gamma_{x,x',y,y'}$  denote $|\partial_X(x,x')-\partial_Y(y,y')|$.

\begin{defin}\label{def:GW}
Let $p\in \mathbb N$, $p>0$. The $p$-th {\em Gromov-Wasserstein
  distance} $\GW_p(X,Y)$ computed between two metric probability spaces $(X,\partial_X,\Prob_X)$ and $(Y,\partial_Y,\Prob_Y)$ is defined as: 
\begin{equation*}
\underset{\Prob}{\rm inf} \left(\int\!\int \Gamma_{x,x',y,y'}^p\,\Prob(dx,dy)\,\Prob(dx',dy')\right)^{1/p},
\end{equation*}
where $\Prob$ ranges over couplings of $\Prob_X$ and $\Prob_Y$.  If we
replace $\Gamma_{x,x',y,y'}$ by $\partial_Z(x,y)$ when $X,Y\subset Z$, 
the distance is called the {\em Wasserstein} distance;
this is analogous to the Hausdorff distance.
\end{defin}

Intuitively, the Gromov-Wasserstein distance is computed by testing every probabilistic matching $\Prob$ between $X$ and $Y$. Each such matching assigns, for a given $x\in X$, a probability distribution of matching over $Y$, and vice-versa. 
The matching that best minimizes the so-called {\em metric distortion score} between the spaces is picked and its score is used as the distance value.  By rounding, one can extract an absolute (non-probabilistic) matching.

Unfortunately, the Gromov-Wasserstein is again very hard to compute in
general; this is a quadratic assignment problem (QAP), which are in
general known to be hard to solve or even provably
approximate~\cite{pardalos1994quadratic}. \\

{\bf Entropic regularization.} 
%
Since the Gromov-Wasserstein distance is still expensive to compute,
attention has been focused recently on a further relaxation, which adjusts the cost function to add a penalty term in terms of the entropy of the couplings~\cite{Cuturi2013},
where the {\em entropy} of a coupling $\Prob$ is
$H(\Prob) = - \int_{X \times Y} \ln(\Prob(x,y)) \Prob(dx,dy)$.

\begin{defin}
For a fixed $\epsilon > 0$, the entropy-regularized Gromov-Wasserstein
distance $\GW^\epsilon_p(X,Y)$ is: 
\begin{align*}
\underset{\Prob}{\rm inf} 
\left( \int\int \Gamma_{x,x',y,y'}^p\,\Prob(dx,dy)\,\Prob(dx',dy') \right)^{1/p} + \epsilon H(\Prob). 
\end{align*}
\end{defin}

Adding the entropy term turns out to result in a problem that
can be efficiently approximated with, e.g., Sinkhorn's fixed-point
algorithm~\cite{Sinkhorn1974}.  However, solving this problem on
thousands of points is feasible only in very low dimensions.

\subsection{Relaxation of Gromov-Hausdorff
  distance}
\label{subsec:SDP}

A different relaxation of the Gromov-Hausdorff distance is given by
semidefinite programming~\cite{Villar2016}.  This has the advantage that the resulting
optimization problem is convex, comes with a certificate of
optimality, and yields a pseudometric that satisfies the triangle
inequality.  On the other hand, the result is a semidefinite program in $n^4$ variables. A further relaxation reduces its complexity by introducing sparsity assumptions \cite{ferreira2018semidefinite}, but it is still infeasible for large datasets.  

Let $n$ and $m$ be the number of points of $X$ and $Y$ respectively. For the semidefinite relaxation, we optimize over a matrix $\hat {\mathbf
  Z} \in \R^{nm \times nm}$.  In fact, we work with an augmented
matrix $\mathbf Z \in \R^{nm+1 \times nm+1}$, with entries of $\mathbf
Z$ indexed by pairs $(ij,i'j')$,$(ij,nm+1)$, $(nm+1, i'j')$ and
$(nm+1,nm+1)$ with $i,i'=1,\ldots n$ and $j,j'=1,\ldots,m$,
\begin{equation}
\label{Z}
\mathbf Z=\left[ \begin{array}{c c} \hat{\mathbf Z }& \mathbf z
    \\ \mathbf z^\top &1 \end{array} \right].
\end{equation}

\begin{defin}
The SDP relaxation of the Gromov-Hausdorff distance is the solution to
the following semidefinite programming problem:
\[
\tilde d_{\mathcal{A},p}(X,Y)=\frac12  \left( \frac1{n^2}\min_{\mathbf Z}\operatorname{Trace}(\Gamma^{(p)}{\hat{\mathbf Z}} ) \right)^{1/p}
\]
subject to $\mathbf Z \in \mathcal{ A}$, where $\mathcal{A}$ denotes the set of matrices satisfying the
constraints: 
$\sum_{i}{\mathbf Z}_{ij,nm+1}\geq 1$, $\sum_{j}{\mathbf Z}_{ij,nm+1}\geq 1$, 
$\sum_{i,i'} {\mathbf Z}_{ij,i'j'}\geq 1$, $\sum_{j,j'} {\mathbf Z}_{ij,i'j'}\geq 1$, 
$\mathbf{ \hat Z 1}=\max\{n,m\} \mathbf z$, $0\leq {\mathbf Z}\leq 1$, and ${\mathbf Z}$ 
is symmetric and positive
semidefinite.
\end{defin}

As noted above, one of the most interesting properties of the SDP
relaxation is that it yields a distance that also satisfies the
triangle inequality; this is somewhat surprising, as there is not an
obvious geometric interpretation as with the optimal transport
framework.

\section{A recursive approximation scheme} 
\label{sec:recu}
In this section, we define and study a recursive approximation scheme
for computing matchings between datasets. This algorithm, that we call
\GWREC\ and define in Algorithm~\ref{alg:gwrec}, requires a black box
matching function (such as the entropy-regularized
Gromov-Wasserstein distance or the SDP relaxation presented in
Section~\ref{subsec:GW}) and a black box clustering function (such as
$K$-means++, \cite{arthur2007k}) as parameters, and uses them recursively to scale and
approximate the matching computation. 

\subsection{Definition}
\label{sec:alg}

{\bf Setup.} We are given a clustering algorithm $\bC$ whose input is a finite metric space $X$ with $|X|>C$ and whose output is a finite metric space $X'$ together with a surjective (point-set) function  $p_X\colon X\to X'$, satisfying, for all $x\in X$:

1. $|X'|=C\in\mathbb N^*$. 
\ \ \ \ 2. $|p_X^{-1}(x)|$ is roughly constant.



{\bf Recursive scheme.} We now present our recursive
decomposition algorithm, that we call \GWREC\ and detail in Algorithm~\ref{alg:gwrec}, for producing a matching between $X$ and $Y$.
%
%

\begin{algorithm}[tb]
\label{alg:gwrec}
\begin{algorithmic}
\STATE {\bfseries Input:} dataset $X$, dataset $Y$
\STATE {\bfseries Output:} matching $\Gamma:X\rightarrow Y$
\STATE {\bfseries Parameters:} (black box) clustering algorithm $\bC$, 
number of clusters $C\ll |X|,|Y|$,
threshold $T\ll|X|,|Y|$, 
(black box) matching algorithm $\bM$
\IF{$|X|$ or $|Y|$ less than $T$}
  \STATE $\Gamma[X,Y]\leftarrow \bM(X,Y)$;
\ELSE
\STATE $X'$, $p_X\leftarrow \bC(X)$; $Y'$, $p_Y\leftarrow \bC(Y)$; \COMMENT{ Use $\bC$ to compute centroids $X'$, $Y'$ with $|X'|=|Y'|=C$, as well as maps $p_X:X\to X'$, $p_Y:Y\to Y'$ } 
\STATE $f\leftarrow \bM(X',Y')$; \COMMENT{Use $\bM$ to match $X'$ and $Y'$}
\STATE $X'\leftarrow \{x_1,\cdots,x_C\}$; $Y'\leftarrow \{y_1,\cdots,y_C\}$; \COMMENT{Enumerate the elements of $X'$ and use consistent enumeration (under the match $f$) for 
$Y'$. Let $\{x_i\}$ and $\{y_i\}$ denote these enumerations. 
}
\FOR{$i$ in $1,\cdots,C$}
\STATE return \GWREC$(p_X^{-1}(x_{i}),p_Y^{-1}(y_{i}))$; 
\ENDFOR
\ENDIF
\caption{\GWREC}
\end{algorithmic}
\end{algorithm}

%
At the conclusion of this algorithm, we obtain a matching $\Gamma$ between $X$ and $Y$; this is the output of the algorithm, along with the induced approximation of the Gromov-type distances. 
In practice, \GWREC\  is stochastic since it depends both on the specifics of the clustering algorithm $\bC$ and on random properties of the optimization solvers for the matching algorithm $\bM$.  As a consequence, the correct way to proceed is perform a search over the parameter space, possibly even searching over different clustering algorithms---each matching comes with a score, which is the associated distortion measure, allowing us to automatically hill-climb to find the best matching.

{\bf Associated code.} Our code is publicly available\footnote{\url{https://github.com/MathieuCarriere/mrec}}. 
It contains $K$-means
and Voronoi partitions (with randomly sampled
germs) for the black box clustering function $\bC$, as well as
entropy-regularized~\cite{Cuturi2013} (based on the \texttt{POT}
Python
package\footnote{\url{https://pot.readthedocs.io/en/stable/}}) and
the semidefinite relaxation~\cite{Villar2016} of the Gromov-Hausdorff distance for the potential black box matching function $\bM$. Moreover, it has been designed so that incorporating new clustering and matching functions is as easy as possible.

\subsection{Theoretical properties}
\label{sec:theo}
We begin by giving some generic properties of \GWREC.  If we let
$\alpha(T)$ denote the time required for the black box matching
algorithm to compute a matching on metric spaces of size smaller than
$T$, then when using $C$ clusters the time to execute \GWREC\ on datasets 
with $N$ points is $\alpha(T) \log_C \frac{N}{T}$.  That is, the
running time is basically determined by $\alpha(T)$.

Next, we turn to the question of how good an approximation we can
expect to get from \GWREC.  Recall that $X' \subset X$ is an
$\epsilon$-net if for every point $x \in X$ there is a point $x' \in
X'$ such that $\partial_X(x,x') < \epsilon$.  A basic observation is that if
$X'$ is an $\epsilon$-net of $X$, the evident induced matching that
assigns each point of $X$ to a point of $X'$ within distance
$\epsilon$ has discrepancy bounded above by $2\epsilon$.
The triangle inequality then implies that if $X' \subset X$ is an
$\epsilon_1$-net and $Y' \subset Y$ is an $\epsilon_2$-net, then the
optimal matching of $X'$ and $Y'$ induces a matching of $X$ and $Y$
that has distortion within $2(\epsilon_1 + \epsilon_2)$ of the
optimal.

It is helpful to reformulate this in terms of clustering algorithms.
Let $\bC$ be a clustering algorithm, and $r>0$.  Say that a finite metric
space $X$ is {\em $r$-clustered} if the output of $\bC$ on $X$
satisfies the condition that for all clusters, all elements in the
cluster are within distance $r$ of each other.  Notice that this in
particular means that choosing a point in each partition produces an
$r$-net.

\begin{prop}
Suppose $X'\subset X$, $Y'\subset Y$.  Given a relation
$R'\subset X'\times Y'$ and retractions $f\colon X\to X'$ and $g\colon Y\to Y'$, consider the derived relation $R\subset X\times Y$ defined by $\{(x,y)\,:\, (f(x),g(y))\in R'\}$.  Suppose that for all $x\in X$, $\partial_{X}(x,f(x))<r$ and likewise for $Y'$.  Then $\dis(R)<\dis(R')+4r$.
\end{prop}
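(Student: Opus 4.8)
The plan is to unwind the definition of distortion for the derived relation $R$ and reduce everything to the distortion of $R'$ plus error terms coming from the retractions. Recall that $\dis(R) = \sup |\partial_X(x,x') - \partial_Y(y,y')|$, where the supremum runs over pairs $(x,y), (x',y') \in R$. So I would fix an arbitrary pair of elements $(x,y), (x',y') \in R$ and bound $|\partial_X(x,x') - \partial_Y(y,y')|$ uniformly by $\dis(R') + 4r$; taking the supremum then gives the claim.

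\medskip

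First I would note that, by the very definition of $R$, the membership $(x,y) \in R$ means $(f(x), g(y)) \in R'$, and similarly $(f(x'), g(y')) \in R'$. Hence the pair $(f(x), g(y)), (f(x'), g(y')) \in R'$ is a legitimate pair for the distortion of $R'$, so we have $|\partial_X(f(x), f(x')) - \partial_Y(g(y), g(y'))| \leq \dis(R')$. The strategy is then to compare $\partial_X(x,x')$ with $\partial_X(f(x), f(x'))$ and $\partial_Y(y,y')$ with $\partial_Y(g(y), g(y'))$ using the triangle inequality and the hypothesis that each point lies within $r$ of its retraction.

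\medskip

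Concretely, two applications of the triangle inequality in $X$ give
\[
|\partial_X(x,x') - \partial_X(f(x), f(x'))| \leq \partial_X(x, f(x)) + \partial_X(x', f(x')) < 2r,
\]
and likewise in $Y$ we get $|\partial_Y(y,y') - \partial_Y(g(y), g(y'))| < 2r$. Combining these two estimates with the distortion bound for $R'$ via one more triangle inequality (now on the real line, adding and subtracting $\partial_X(f(x), f(x'))$ and $\partial_Y(g(y), g(y'))$) yields
\[
|\partial_X(x,x') - \partial_Y(y,y')| < \dis(R') + 2r + 2r = \dis(R') + 4r.
\]
Since the pair was arbitrary, taking the supremum over all admissible pairs of $R$ gives $\dis(R) < \dis(R') + 4r$, as desired.

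\medskip

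I do not expect any serious obstacle here: the argument is a bookkeeping exercise in the triangle inequality, and the factor of $4$ arises transparently as $2r$ from each of the two metric spaces. The only point requiring a little care is making sure that $(f(x), g(y))$ and $(f(x'), g(y'))$ genuinely form a valid pair witnessing $\dis(R')$, which is immediate from the definition of the derived relation $R$. A minor subtlety worth flagging is that retractions need not be injective, so $f(x) = f(x')$ is possible; but this causes no trouble since the inequalities above hold regardless, and in that degenerate case $\partial_X(f(x),f(x')) = 0$ fits the same bound.
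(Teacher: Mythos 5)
Your proof is correct and follows essentially the same route as the paper's: pass from a pair in $R$ to the corresponding pair in $R'$, then use the triangle inequality in each of $X$ and $Y$ to pick up an error of $2r$ per space, for a total of $4r$. Your two-sided formulation with absolute values is if anything slightly tidier than the paper's, which writes an upper bound for $\partial_X$ and a lower bound for $\partial_Y$ and leaves the symmetric direction implicit.
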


\begin{proof}
For $(x_{1},y_{1}),(x_{2},y_{2})\in R$, we have
$(f(x_{1}),g(y_{1})),(f(x_{2}),g(y_{2}))\in R'$. Thus, we have, using the triangle inequality:
$\partial_{X}(x_{1},x_{2}) 
\leq \partial_{X}(x_{1},f(x_{1}))+\partial_{X}(f(x_{1}),f(x_{2}))
+\partial_{X}(f(x_{2}),x_{2})$ and 
$\partial_{Y}(y_{1},y_{2}) 
\geq -\partial_{Y}(y_{1},g(y_{1}))+\partial_{Y}(g(y_{1}),g(y_{2}))
-\partial_{Y}(g(y_{2}),y_{2})$.

This leads to:
$|\partial_{X}(x_{1},x_{2})-\partial_{X}(y_{1},y_{2})|<
|\partial_{X}(f(x_{1}),f(x_{2}))
-\partial_{Y}(g(y_{1}),g(y_{2}))|+4r$.
\end{proof}
This immediately implies the following corollary.

\begin{cor}
Let $X$ and $Y$ be $r$-clustered for $\bC$ with the number of
$\bC$-clusters $\leq C$, and let $X'$ and $Y'$ denote the metric
spaces of clusters generated by $\bC$.  Then {\em \GWREC} (computed
with the exact Gromov-Hausdorff distance as the matching function
$\bM$) produces a matching between $X$ and $Y$ with distortion at most
$\dis(R) + 4r$, where $R$ is the correspondence realizing 
$d_{GH}((X', \partial_X'), (Y', \partial_Y'))$.
\end{cor}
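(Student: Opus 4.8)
The plan is to obtain this as a direct consequence of the preceding Proposition, identifying the top-level clustering step of $\GWREC$ with the data $(X',Y',f,g)$ of that Proposition, and then checking that the algorithm's final output is contained in the derived relation $R$ appearing there.

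First I would build the retractions out of the $r$-clustered hypothesis. A single application of $\bC$ to the $r$-clustered spaces $X$ and $Y$ produces clusters $p_X^{-1}(x_i)$ and $p_Y^{-1}(y_i)$, each of diameter at most $r$, and since there are at most $C$ of them the top-level partition is already the one used by the algorithm. Picking one representative point in each cluster realizes $X'$ and $Y'$ as $r$-nets inside $X$ and $Y$, with the inherited metrics $\partial_X'$ and $\partial_Y'$, and defines retractions $f\colon X\to X'$ and $g\colon Y\to Y'$ sending each point to its cluster representative. The $r$-clustered condition is exactly what gives $\partial_X(x,f(x))<r$ and $\partial_Y(y,g(y))<r$ (up to the open/closed convention in the definition, which is what lets the non-strict bound $\dis(R)+4r$ absorb the boundary case), so the hypotheses of the Proposition are met. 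Now let $R$ denote the correspondence realizing $d_{GH}((X',\partial_X'),(Y',\partial_Y'))$, which is precisely what $\bM$ returns at the top level when it is the exact Gromov--Hausdorff distance; forming the derived relation $\tilde R=\{(x,y):(f(x),g(y))\in R\}$, the Proposition yields $\dis(\tilde R)<\dis(R)+4r$ with no further computation.

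The only step requiring care is to verify that the matching $\Gamma$ actually produced by $\GWREC$ satisfies $\Gamma\subseteq\tilde R$, from which $\dis(\Gamma)\le\dis(\tilde R)$ follows because a supremum taken over a subset cannot increase. Here the recursive structure of the algorithm does the work: after matching $X'$ and $Y'$, the consistent enumeration converts $R$ into a pairing $x_i\leftrightarrow y_i$ with $(x_i,y_i)\in R$, and every subsequent recursive call matches points of $p_X^{-1}(x_i)$ only to points of $p_Y^{-1}(y_i)$. Consequently any matched pair $(x,y)\in\Gamma$ has $f(x)=x_i$, $g(y)=y_i$, and $(x_i,y_i)\in R$, hence $(x,y)\in\tilde R$; crucially this persists no matter how deeply the recursion subdivides each cluster pair. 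Combining the two bounds gives $\dis(\Gamma)\le\dis(\tilde R)<\dis(R)+4r$, which is the claim.

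I expect the genuine difficulty to be bookkeeping rather than conceptual. The two points to get right are: reconciling the abstract centroid space returned by $\bC$ with an honest subset $X'\subset X$ of representatives, so that $\partial_X(x,f(x))$ is literally a distance in $X$ and the Proposition applies verbatim; and confirming that the enumeration step extracts a pairing compatible with $R$, since that compatibility is exactly what makes the cluster-respecting containment $\Gamma\subseteq\tilde R$ hold at every level of the recursion. Neither is deep, but both must be stated carefully for the reduction to the Proposition to be airtight.
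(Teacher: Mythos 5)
Your proposal is correct and follows exactly the route the paper intends: the paper gives no separate proof, asserting the corollary is immediate from the preceding Proposition, and your argument is precisely that reduction (retractions from cluster representatives, the exact GH correspondence as $R'$, and the observation that the recursion keeps $\Gamma$ inside the derived relation). The extra care you take with the representative-versus-centroid identification and the containment $\Gamma\subseteq\tilde R$ is a welcome filling-in of details the paper leaves implicit.
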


Now, a very natural question to ask is for a metric space with some
kind of dimension restriction, how many clusters are needed to obtain
an $r$-clustering.  A standard way of encoding the dimension of a
metric space is via {\em doubling dimension}.  Recall that a metric
space $(X,\partial_X)$ has doubling dimension $d$ if every ball of
radius $\epsilon$ can be covered by $2^d$ balls of radius
$\epsilon/2$.  For example, Euclidean space $\bR^n$ has doubling
dimension $O(n)$.  The following standard result now follows from an
easy counting argument. 

\begin{prop}
Let $(X,\partial_X)$ be a metric space with doubling dimension $d$.
Then there exists an $r$-clustering with approximately
$\left(\frac{M}{r}\right)^{d}$ clusters, where $M = \diam(X)$. 
\end{prop}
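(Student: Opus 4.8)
The plan is to prove existence by a direct covering argument that iterates the doubling hypothesis and then refines the resulting cover into a genuine partition. First I would fix any basepoint $x_0 \in X$ and observe that, since $\diam(X) = M$, the entire space is contained in the ball $B(x_0, M)$. This reduces the problem to covering a single ball of radius $M$ by balls of small radius.

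Next I would iterate the doubling property. By hypothesis every ball of radius $\epsilon$ is covered by $2^d$ balls of radius $\epsilon/2$; applying this $k$ times to $B(x_0, M)$ produces a cover of $X$ by at most $2^{dk}$ balls, each of radius $M/2^k$. Since a ball of radius $\rho$ has diameter at most $2\rho$, to guarantee that every covering ball has diameter at most $r$ it suffices to choose $k$ with $M/2^k \le r/2$, i.e. $k = \lceil \log_2(2M/r) \rceil$. With this choice $2^k \le 4M/r$, so the number of covering balls is at most $2^{dk} = (2^k)^d \le (4M/r)^d$, which is $(M/r)^d$ up to the constant $4^d$ depending only on $d$. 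This is the sense in which the count is ``approximately'' $(M/r)^d$.

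Finally I would convert the cover into a clustering. Enumerating the covering balls as $B_1, \dots, B_N$ with $N \le 2^{dk}$, I would assign each $x \in X$ to the cluster consisting of those points whose \emph{first} containing ball in this fixed enumeration is $B_j$. The resulting clusters are disjoint, cover $X$, and each is contained in some $B_j$ of diameter at most $r$, so any two points in a common cluster lie within distance $r$. This is exactly an $r$-clustering with at most $\approx (M/r)^d$ clusters, as claimed.

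The computation is entirely routine, and I do not expect any serious obstacle. The only points requiring care are the factor-of-two bookkeeping between the radius of a ball and its diameter, and the observation that a \emph{cover} must be refined to a \emph{partition} before it qualifies as a clustering in the sense of the definition. One could alternatively argue via a maximal $(r/2)$-separated subset together with its Voronoi diagram, bounding the packing number by the same iterated-doubling count; this route yields the same estimate.
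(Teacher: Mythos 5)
Your argument is correct and is precisely the ``easy counting argument'' the paper alludes to without writing out: iterate the doubling property from a single ball of radius $M$ down to balls of diameter at most $r$, count $2^{dk}$ with $k\approx\log_2(M/r)$, and refine the cover to a partition; the ceiling in your choice of $k$ is exactly the ``rounding of logarithms'' the paper cites as the source of the word ``approximately.'' Your bookkeeping (radius versus diameter, cover versus partition) matches the paper's definition of $r$-clustered, so there is nothing to add.
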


Here the approximation in the statement comes from issues of rounding
logarithms. 
%
%
These results give quite pessimistic estimates, but they are useful
for providing a sense of lower bounds on the performance to expect
from  \GWREC\ and its recursive decomposition.

The analyses above focus on the situation in which the coarse
approximation of the data that we get from partitioning
is close in the Gromov-Hausdorff metric to the original data.
(Analogous analyses can be performed using other metrics.)  We now explain a situation in which we can expect the
algorithm to work well even when the partition approximation is in this sense far from the original data.

Consider the toy model where $X$ and $Y$ are both finite samples
from a metric space $Z$ with the property that $Z$ can be partitioned
into $K$ clusters $\{C_i\}$ such that the minimum Hausdorff distance
$\delta$ between any pair of clusters $C_i$ and $C_j$ is substantially
larger than the maximum diameter of the clusters.  Specifically,
assume that  
\[
\delta = \min_{i,j} d_H(C_i, C_j) > 2 \max_i \diam(C_i) =
\eta.
\]
First off, note that in this case, the optimal clustering can be found
by the $K$-means algorithm; each point in a cluster is closer to the
centroid than to any point outside the cluster.  In addition, we know
that the expected number of samples required to to obtain a point in
each cluster is bounded above by $\left(\min_i (\# C_i)\right)^{-1}
\ln (K)$; this follows from a generalized coupon-collector analysis.
Let us assume that we have chosen enough points so that with high
probability we indeed have at least one sample from each cluster.
In this situation, the optimal matching between $X$ and $Y$ is
realized by a matching of the clusters.

\begin{prop}
Under the hypotheses above, the optimal matching of $X$ and $Y$
permutes the clusters.
\end{prop}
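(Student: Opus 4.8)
The plan is to reduce the statement to a separation-of-scales estimate and then a short threshold argument on distortions. Write $X_i = X \cap C_i$ and $Y_i = Y \cap C_i$ for the induced samples, all of which are nonempty by the coupon-collector assumption. First I would record the two elementary metric facts that the hypothesis $\delta > \eta = 2\max_i \diam(C_i)$ is designed to provide: (i) any two points lying in a common cluster are at distance at most $\eta/2$; and (ii) any two points lying in distinct clusters $C_i, C_j$ are at distance at least $\delta - \eta/2$. Fact (i) is immediate from the diameter bound. For (ii), I would use that for any cluster pair the minimum gap $g_{ij} = \min_{a \in C_i,\, b \in C_j} \partial_Z(a,b)$ satisfies $d_H(C_i, C_j) \le g_{ij} + \max_k \diam(C_k)$, whence $g_{ij} \ge \delta - \eta/2$. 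Since $\delta > \eta$, the two ranges are disjoint: within-cluster distances are $\le \eta/2$, while between-cluster distances exceed $\eta/2$.

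Next I would produce an upper bound on the distortion of the optimal correspondence by exhibiting an explicit cluster-preserving competitor. Take $R_0 = \bigcup_{i=1}^{K} (X_i \times Y_i)$; its projections are surjective because every $X_i$ and $Y_i$ is nonempty, so $R_0$ is a genuine correspondence. For $(x,y),(x',y') \in R_0$ with $x,y \in C_i$ and $x',y' \in C_j$, the reverse triangle inequality gives
\[
|\partial_X(x,x') - \partial_Y(y,y')| \le \partial_Z(x,y) + \partial_Z(x',y') \le \diam(C_i) + \diam(C_j) \le \eta .
\]
Hence $\dis(R_0) \le \eta$, and so the infimizing correspondence $R^\ast$ realizing $d_{GH}(X,Y)$ satisfies $\dis(R^\ast) \le \eta$ as well.

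The core step is the forcing argument showing that $R^\ast$ can neither split nor merge clusters. Suppose, for contradiction, that two points $x, x'$ of a common cluster $C_i$ have $R^\ast$-partners $y \in C_k$ and $y' \in C_l$ with $k \ne l$. Then by (i) and (ii), $\partial_X(x,x') \le \eta/2$ while $\partial_Y(y,y') \ge \delta - \eta/2$, so $\dis(R^\ast) \ge \delta - \eta$, contradicting the upper bound once $\delta - \eta > \eta$. The same estimate, with the roles of $X$ and $Y$ reversed, rules out a merge: if the induced cluster map sends $i \ne j$ to the same target, then some $x \in C_i$, $x' \in C_j$ have partners in a common cluster, again forcing $\dis(R^\ast) \ge \delta - \eta$. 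Consequently the assignment $\sigma$, sending $i$ to the unique cluster containing all $R^\ast$-partners of $X_i$, is a well-defined injection of the finite index set $\{1, \dots, K\}$ to itself, hence a permutation; surjectivity of the projection of $R^\ast$ onto $Y$ confirms every target cluster is used. This is exactly the assertion that the optimal matching permutes the clusters.

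The step I expect to be the main obstacle is the constant bookkeeping in the threshold comparison: the competitor $R_0$ only guarantees $\dis(R^\ast) \le \eta$, whereas the forcing argument produces a penalty of $\delta - \eta$, so the clean contradiction requires the separation to dominate the intra-cluster slack. Tightening this—either by sharpening the upper bound on $\dis(R^\ast)$ below $\eta$, or by quantifying (ii) more precisely in terms of $d_H$—is the delicate part; everything else is the reverse triangle inequality together with a finite counting argument. A secondary point to handle carefully is that correspondences are many-to-many, so ``permutes the clusters'' must be read as the existence of a single permutation $\sigma$ with $(x,y) \in R^\ast$ and $x \in C_i$ implying $y \in C_{\sigma(i)}$, which is precisely what the two forcing estimates jointly establish.
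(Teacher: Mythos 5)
Your proof follows the same route as the paper's: exhibit the cluster-preserving correspondence $R_0=\bigcup_i (X_i\times Y_i)$ as a competitor to bound the optimal distortion above by $\eta$, and show that any correspondence that splits or merges clusters incurs distortion at least $\delta-\eta$. The two elementary facts you isolate (within-cluster distances at most $\eta/2$; between-cluster gaps at least $\delta-\eta/2$, via $d_H(C_i,C_j)\le g_{ij}+\max_k\diam(C_k)$) and the reverse-triangle-inequality estimate $\dis(R_0)\le \diam(C_i)+\diam(C_j)\le\eta$ are exactly the computations that the paper's two-sentence proof compresses, and they are correct; so is your closing observation that the induced cluster map is an injection of a finite set, hence a permutation.

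The obstacle you flag at the end is real, and it is not yours alone: the paper's proof asserts that the split/merge penalty satisfies $\delta-\eta>\eta$, but under the stated hypothesis $\delta>\eta$ the bounds only yield $\delta-\eta>0$, which does not beat the competitor's $\eta$. The contradiction genuinely requires $\delta>2\eta$, i.e., the minimum Hausdorff separation must exceed four times the maximum cluster diameter rather than twice. Neither side of the comparison can be sharpened in general: the $\eta$ upper bound for cross-cluster pairs in $R_0$ and the $\delta-\eta$ lower bound for a split are both tight in the worst case. So your write-up is a faithful, fully detailed account of the paper's argument that correctly identifies the one inequality the paper asserts without justification; the clean repair is to strengthen the hypothesis to $\delta>2\eta$, which is consistent with the paper's informal framing that the separation is ``substantially larger'' than the cluster diameters.
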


\begin{proof}
Any matching that separates points within a cluster (i.e., assigns
points $x_1, x_2 \in C_i \subset X$ to distinct clusters of $Y$) or
merges clusters will have discrepancy bounded below by $\delta -
\eta > \eta$, whereas a matching that permutes the clusters
will have discrepancy bounded above by $\eta$.
\end{proof}

An immediate corollary is that \GWREC\ will find the
optimal matching of the top-level clusters with high probability. Since we are only using metric information, our matchings are non-unique up to
symmetry in the data; if we assume that the inter-cluster distances
are unique, the matching returned is the unique matching that
identifies clusters to themselves.

\subsection{Discussion}

One of the most useful features of the \GWREC\ framework is that because
matchings can be evaluated in terms of their distortion, we can search
over parameter space to find the minimal distortion embedding.  For
example, it is reasonable to try a variety of clustering algorithms
and numbers of centers in order to ascertain which parameters lead to
the best approximation of the optimal matching.  Moreover, we can even
try different matching procedures.  This in particular means that the
algorithm can tolerate a substantial amount of uncertainty about the
decomposition of the best matching, as long as there exists a
partition and allowable resolution for which there is a good
matching.

Another useful aspect of this is that we can easily make \GWREC\ robust
to outliers by subsampling cross-validation; specifically, we can
repeatedly subsample the two metrics spaces $X$ and $Y$, perform the matchings on the subsamples, and identify points
that result in high distortion matchings.

Finally, although we have discussed the matching algorithm in terms of
a partition of the dataset, instead it is entirely reasonable to
imagine an overlapping cover of the datasets used for matchings.  We
only mention this refinement here; we intend to return to this in
future work.

\section{Applications}
\label{sec:appli}

\begin{figure*}
    \centering
    \begin{tabular}{p{5\textwidth/16} p{5\textwidth/16} p{5\textwidth/16}}
    \includegraphics[width=5\textwidth/16]{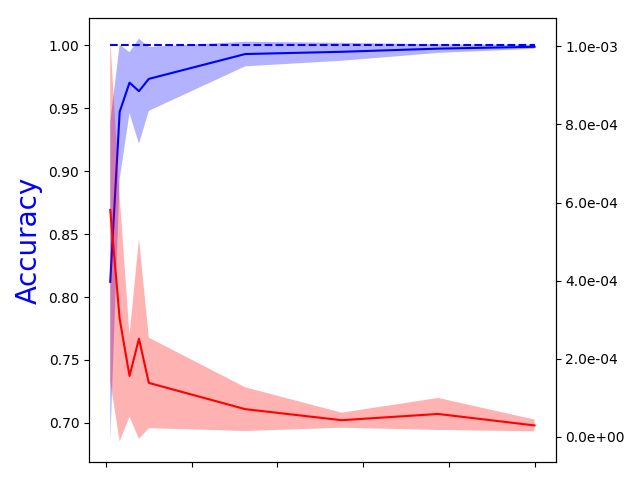} &
    \includegraphics[width=5\textwidth/16]{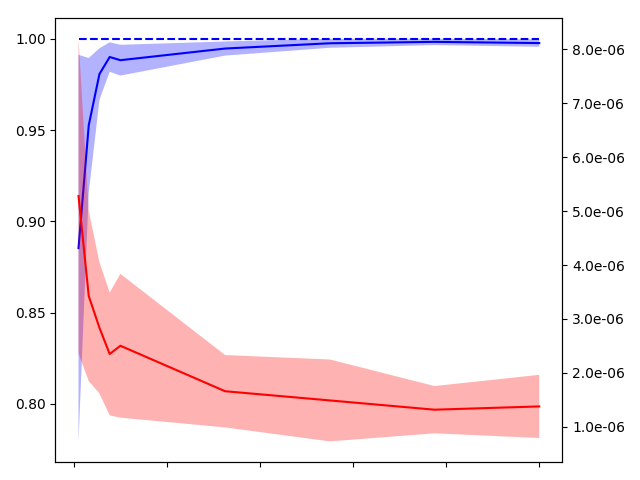} &
    \includegraphics[width=5\textwidth/16]{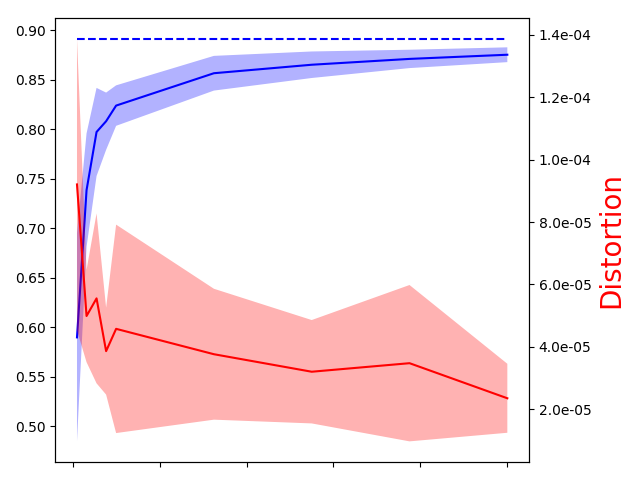} \\
    \includegraphics[width=5\textwidth/16]{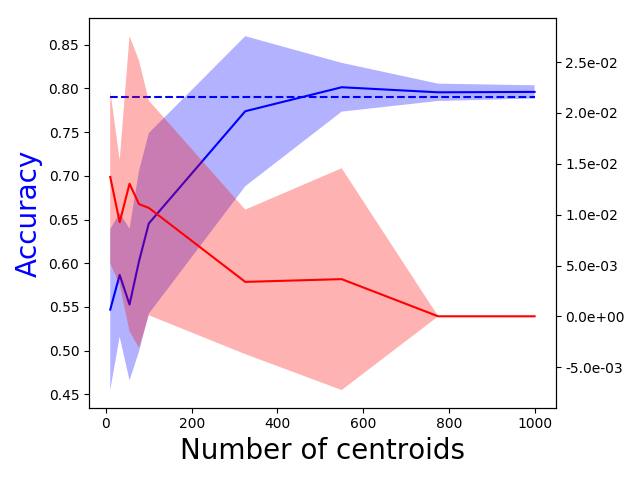} &
    \includegraphics[width=5\textwidth/16]{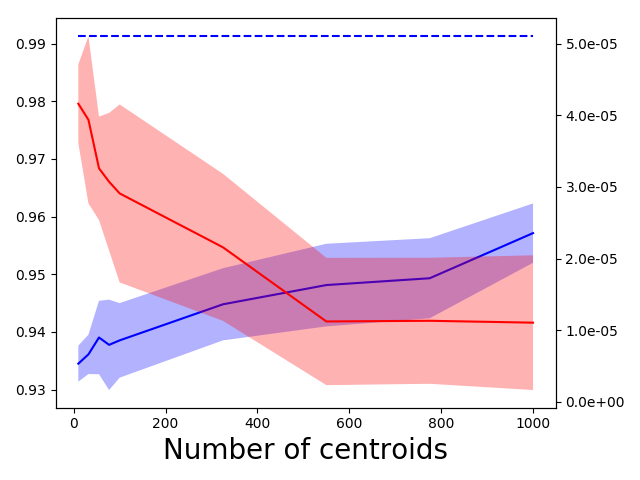} &
    \includegraphics[width=5\textwidth/16]{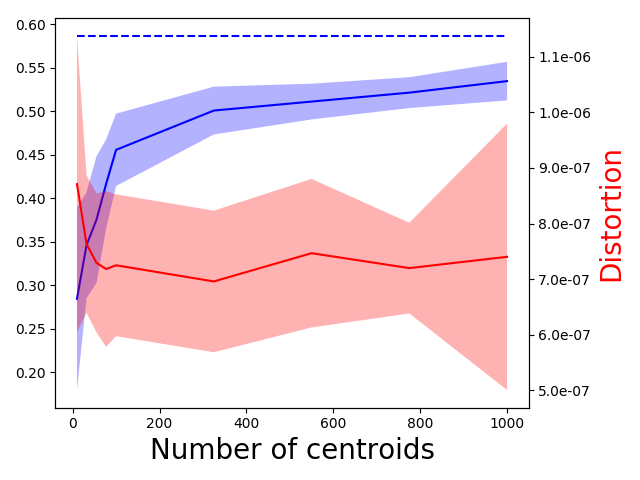}
    \end{tabular}
    \caption{Mean and variance of the accuracy and distortion of the matchings computed by \GWREC\ (for the parameters inducing the best matching) w.r.t. the number of centroids used for \texttt{Synth} (upper left), \texttt{Synth+} (upper middle), \texttt{PBMC} (upper right), \texttt{HEMA} (lower left), \texttt{BDTNP} (lower middle) and \texttt{BRAIN} (lower right). The dashed line corresponds to the accuracy obtained without doing recursion.}
    \label{fig:acc}
\end{figure*}

In this section, we demonstrate the use of \GWREC\ by applying it on synthetic and real biological
single-cell datasets with ground-truth labels. Overall, we show that the recursion
procedure used in \GWREC\ is highly efficient, in the sense that the
computed matchings are on par, or slightly better (as
measured by the accuracy of label transfer under the computed
matchings), than the ones obtained without doing any recursion, with
significantly smaller running times. We first provide
proof-of-concepts of \GWREC\ for synthetic and real data in
Sections~\ref{sec:synth} and~\ref{sec:pbmc}. 
Then, we
present applications of \GWREC\ for correcting batch effects on a
single-cell datasets in Section~\ref{sec:hema} and aligning across
different measurement modalities in Sections~\ref{sec:bdtnp} and~\ref{sec:brain}.  
Whereas the datasets in Sections~\ref{sec:hema} and~\ref{sec:bdtnp} can be handled directly
the dataset presented in Section~\ref{sec:brain} is
too large to be handled with usual matching algorithms, making \GWREC\
as far as we know the only algorithm suitable for working at scale
with the recent breakthroughs in single-cell data generation.

{\bf Single-cell datasets.} Single-cell RNA sequencing has been a
major event in the fine characterization of many biological systems at the molecular level, since it allows biologists to study the gene
expression levels of thousands of individual cells, leading to a clearer understanding of many biological processes. For instance, clustering the expression of single-cell data identifies sets of cells with common properties and function, and the relation between these clusters can reflect transitional states.  In these datasets, each cell
is represented as a point in Euclidean space, in which every dimension
is associated to the expression of a gene of interest, and each single cell coordinate
contains the {\em expression} of the corresponding gene in that cell
(that is, how often is this gene transcribed in the cell, as measured
with the number of transcript fragments that are detected in the cell). Hence, raw single-cell datasets are usually given as
integer matrices. They often require preprocessing; we will use the Python packages
\texttt{Scanpy}~\cite{Wolf2018}
and
\texttt{Randomly}~\cite{Aparicio2018}.

{\bf Batch effects.} Single-cell RNA sequencing has resulted in a huge
increase in data availability. However, many methodological issues with using these datasets remain.  Notably, datasets computed in different labs or with
different protocols usually exhibit unwanted biases, often
referred to as {\em batch effects}~\cite{Haghverdi2018}. Matching, or
aligning, these datasets in order to integrate them together in a
robust way is thus a problem of major importance, and optimal
transport with the Gromov-Wasserstein distances has recently been shown to
be a promising solution in this application~\cite{Forrow2019}. In
Section~\ref{sec:hema}, we present an example of batch effect
correction with \GWREC.


{\bf Different measure modalities.} Another class of alignment problems
comes from situations where we have measurements of the same dataset
coming from different kinds of technologies; for example, measuring
expression and chromatin accessibility.  Understanding the amount of alignment
that can be expected between these two kinds of measurements is an
extremely interesting problem.  In Sections~\ref{sec:bdtnp} and~\ref{sec:brain}, we describe
applications to two such datasets.

{\bf Parameters and results.} Each of the datasets in
Sections~\ref{sec:synth},~\ref{sec:pbmc},~\ref{sec:hema}
and~\ref{sec:brain} presented below is comprised of two groups with the same list of
associated ground-truth labels  (except for \texttt{BDTNP}; see the details in Section~\ref{sec:bdtnp} below). On each of these datasets, we use a
Voronoi partition computed with randomly sampled centroids\footnote{We also tried using $K$-means. However, we did not observe noticeable changes in accuracy, but substantially longer running times, so we stick to randomly sampled centroids.} as the clustering algorithm $\bC$, a threshold $T=10$, and we searched over 20 runs with
matchings coming from both the entropy-regularized Gromov-Wasserstein
distance (with $\epsilon$ ranging from $10$ to $10^{-4}$ in a log-scale) and the SDP relaxation as the matching algorithm $\bM$ for
\GWREC. 
Note that, for all datasets except \texttt{BDTNP} in
Section~\ref{sec:bdtnp} and \texttt{BRAIN} in
Section~\ref{sec:brain}, the two groups belong to the same common
space, so we can use the Wasserstein distance as well.  We also
compute matchings for an increasing number $C$ of clusters (see
Algorithm~\ref{alg:gwrec}) ranging from $10$ to $10^3$ in a log-scale. For each
computed matching, we measure three quantities: the running time of
\GWREC, the distortion of \GWREC\ (see Section~\ref{sec:GH}), and its
accuracy, given by computing the fraction of points in the first group
whose associated points under the matching given by \GWREC\ share the
same label (in the second group). We show in Figure~\ref{fig:acc} the evolution of accuracy and distortion 
w.r.t. the number of centroids for all datasets (see Appendix, Figure~\ref{fig:time}, for running times). We finally provide in Table~\ref{tab:res} an overall comparison with directly computing  the matchings (i.e.,
without doing any recursion). In this table, we provide the accuracies of the matchings with lowest distortion.
All computations were performed on an AWS machine with a Xeon Platinum 8175 processor.
See also Appendix, Figures~\ref{fig:params} and~\ref{fig:synth}, for more illustrations and examples of parameter tuning. 

\subsection{Matching of simulated datasets \texttt{Synth(+)}}\label{sec:synth}
We first apply \GWREC\ on a synthetic dataset sampled from a
mixture of three Gaussian probability distributions located at
different positions in the Euclidean plane $\mathbb{R}^2$. More specifically, we drew
$6,000$ samples that we eventually split into two groups. 
Accuracy is then measured as the
ability of \GWREC\ to match points sampled from the same Gaussian
probability distributions together.
We also run an experiment with $60,000$ samples (called \texttt{Synth+}),

\subsection{Matching of subsampled datasets \texttt{PBMC}}\label{sec:pbmc}
In our second example, we test the performance of \GWREC\ on a
single-cell transcriptional dataset. We collected the data presented
in~\cite{Kang2018}, which is comprised of single peripheral blood
mononuclear cells (PBMC), with eight different cell types (CD4, CD8,
B-cell, NK-cell, dendritic cells (Ddr), CD14, monocytes (Mnc) and
megakaryocytes (Mgk)). 
We preprocessed the cells with \texttt{Randomly}~\cite{Aparicio2018}
to create a dataset of $6,573$ cells in $1,581$ dimensions (each of
which representing a highly variable gene of interest). We then split
the data into two groups and measured the ability of \GWREC\ to match
together cells with the same types coming from different groups. 


\subsection{Matching of different protocols \texttt{HEMA}}\label{sec:hema}
In our third application, we study an example of batch effects in
single-cell transcriptional data for which optimal transport has already proved
useful~\cite{Forrow2019}. The data is comprised of two groups of
$2,729$ and $813$ single hematopoietic cells respectively, with
$3,467$ associated genes and three different cell types (common
myeloid  progenitors (CMP), granulocyte-monocyte  progenitors (GMP)
and megakaryocyte-erythrocyte progenitors (MEP)). These groups were
generated with the SMART and MARS protocols respectively, as explained
in~\cite{Haghverdi2018}. As mentioned above, even though the
dimensions are in correspondence, the fact that different protocols
have been used introduces a bias, or {\em batch effect}, in the data,
We preprocessed the data using the method presented
in~\cite{Zheng2017} and available in the Python package
\texttt{Scanpy} and then reduced the number of dimensions to 50 with
PCA. 

\subsection{Matching of different modalities \texttt{BDTNP}}\label{sec:bdtnp}
In this section, we study 3,039 single cells coming from Drosophila embryos, as presented in~\cite{Nitzan2018}. Spatial coordinates in the 2D tissues were obtained with fluorescence in situ hybridization (FISH), as well as the expression of 84 marker genes, leading to spaces with different dimensions. This means that usual Wasserstein distance cannot be used here.
We followed the exact same procedure as in~\cite{Nitzan2018}, i.e., pairwise distances in both spaces were computed with nearest-neighbor graph, so as to approximate the geodesics from the manifolds on which the datasets have been sampled (following the manifold assumption made in~\cite{Nitzan2018}). Finally, the ground truth is here given by the true correspondence between the cells. Hence, we chose to measure accuracy with the (normalized) area under the curve that shows, for each radius $r>0$, the fraction of cells $\alpha(r)$ (in expression space) whose corresponding cell (in tissue space) under the computed matching is at distance at most $r$ from the true corresponding cell (which is a common practice  in computer graphics~\cite{Carriere2015}).   

\subsection{Matching of different modalities \texttt{BRAIN}}\label{sec:brain}

In our last example, we focus on a dataset of single cells sampled
from the human brain with eight different cell types (astrocytes (Ast),
endothelial cells (End), excitatory neurons (Ex), inhibitory neurons
(In), microglia (Mic), oligodendrocytes (Oli) and their precursor
cells (OPC)), presented in~\cite{Lake2017}.  
The challenge of this dataset is
two-fold. First, the dimensions of the two groups are not in
correspondence (as in Section~\ref{sec:bdtnp}): the gene expressions were measured in the
first group, while the second group contains the {\em DNA region
  accessibilities} (that is, whether given regions in the DNA of the
cell nucleus are open, i.e., accessible for transcription, or
not). Second, the datasets are too large to be handled with usual
matching techniques: 
the first group contains $34,079$ cells, while the second
one contains $27,906$  cells.  
We preprocessed the first group with the method presented
in~\cite{Zheng2017} and available in the Python package
\texttt{Scanpy}, and the second one with TF-IDF. Then, we applied PCA
on each group separately to reduce the number of dimensions to
50. Note that since the dimensions are not in correspondence anymore,
we have to use the Gromov-Wasserstein distance.  We also added
a standard Wasserstein cost by using a
binary cost matrix containing the correspondence between the genes and
the DNA regions 
(that is, the $(i,j)$ entry of
the matrix is one if the genomic coordinates of gene $i$ intersects
DNA region $j$).






\subsection{Results and comparison with no recursion.} 

It can be seen from  Figure~\ref{fig:acc} 
that, as expected, accuracy 
generally tends to increase while distortion decreases with the number of centroids. Note also that, for the \texttt{BRAIN} dataset, the variance of the distortion is pretty large, and its decrease is not obvious. This large variance is also the reason why the mean accuracy (observed on the plot) is actually much smaller than the one corresponding to the matching with lowest distortion (shown in Table~\ref{tab:res})---note that this is actually true (to a certain extent) for all datasets. Note also that there is clear gap in Figure~\ref{fig:acc} between the accuracy of matchings computed with \GWREC\ on \texttt{BDTNP} and the baseline. This suggests that convergence has not occurred yet and more centroids are needed. 
Indeed, the accuracy and running times provided in Table~\ref{tab:res}  were obtained by allowing the number of centroids to go up to 3,000 (see Appendix, Figure~\ref{fig:bdtnp+}).



Finally, 
it can be seen from Table~\ref{tab:res} that recursion is mandatory
on \texttt{Synth+} and \texttt{BRAIN} for the matching computation to end in a
reasonable amount of time. Moreover, the results are generally on par (or slightly better) than the baseline with substantially lower running times. 
In the case of \texttt{HEMA}, the
accuracy is more significantly better\footnote{The accuracies for \texttt{HEMA} provided
  in this work are different from the ones presented
  in~\cite{Forrow2019} on the same dataset due to the following
  reason: in~\cite{Forrow2019} the authors provide the average of the
  accuracies associated to several random samples of $300$ cells in
  each group, whereas we measure the accuracy on the whole dataset.}.  This suggests that for some datasets it may be
the case that the recursive procedure may lead to better
approximations than the ``absolute'' algorithms.

\begin{table}
\begin{center}
\begin{tabular}{|l|cc||cc|}
\cline{2-5}
\multicolumn{1}{c}{}	& \multicolumn{2}{|c||}{Accuracy (\%)} & \multicolumn{2}{|c|}{Time (s)} \\
\cline{1-5}
\multicolumn{1}{|c|}{Dataset}   & \GWREC\ & No rec. & \GWREC & No rec. \\
\cline{1-5}
	\multicolumn{1}{|c|}{$\texttt{Synth}$}  & \bf{100} & \bf{100}  & \bf{14.1} & 505.8 \\
	\multicolumn{1}{|c|}{$\texttt{Synth+}$} & \bf{100} & \bf{100} & \bf{126.7} & 14,325.6 \\
	\multicolumn{1}{|c|}{$\texttt{PBMC}$}   & \bf{89.9} & 89.5 &  \bf{29.1} & 77.3\\
	\multicolumn{1}{|c|}{$\texttt{HEMA}$}   & \bf{84.9} & 79.5 &  \bf{20.5} & 119.0 \\
	\multicolumn{1}{|c|}{$\texttt{BDTNP}$}  & \bf{99.2} & 99.1  & \bf{73.2} & 508.5 \\
	\multicolumn{1}{|c|}{$\texttt{BRAIN}$}  & \bf{59.6} & 58.6 &  \bf{125.1} & 15,580.5 \\
\cline{1-5}
\end{tabular}
\end{center}
\caption{Results of \GWREC\ and its counterpart with no recursion.}
\label{tab:res}
\end{table}

\section{Conclusion}
\label{sec:conc}

In this article, we introduced \GWREC: a fast and versatile tool for
computing matchings on large-scale datasets.  It can use any black box
matching or clustering function defined by the user, and scale it with
a recursive scheme so as to be able to process datasets with very
large numbers of points in an efficient way.  Theoretical analysis
shows that the algorithm can in principle perform well.  We validated
its use with several applications in single-cell molecular analysis, achieving comparable or
better performance than the state-of-the-art on smaller datasets and
providing novel results on datasets too large to be aligned efficiently by any
other means.

\bibliography{biblio}
\bibliographystyle{alpha}

\newpage
\appendix

\section*{Appendix}

\subsection*{Additional figures}
In this appendix, we provide various additional plots. \\

\begin{itemize}
    \item In Figure~\ref{fig:time}, we display the running times of our experiments w.r.t. the number of centroids. Unsurprisingly, it increases with the number of centroids, with a rate that depends on the experiment.
    \item In Figure~\ref{fig:bdtnp+}, we confirm that \texttt{MREC} converges on the \texttt{BDTNP} experiment by showing the accuracy and distortion for a larger number of centroids than presented in the main article.
    \item In Figure~\ref{fig:params}, we present the dependency of accuracy, distortion and running times w.r.t. both entropy and number of centroids. Note that for small entropy values on \texttt{HEMA}, numerical errors occurred, leading to missing values for the distortion. 
    \item In Figure~\ref{fig:synth}, we provide illustrations of the datasets and computed matchings.
\end{itemize} 

\subsection*{Code}


We provide the code and data that we used to run the experiments (except for \texttt{BRAIN} and \texttt{Synth+} which are too large, although we provide a way to generate the data for \texttt{Synth+} in the \texttt{synth+\_matrix.py} file---just run \texttt{python synth+\_matrix.py} in a terminal) in the \texttt{experiments} folder available on this link: \\

\url{https://drive.google.com/open?id=16KZTlS3pDFW64QTfErJhaLeQX0kWVFc3} \\

The experiments can be reproduced by running \texttt{./launchS.sh} and \texttt{./launchnonS.sh} in a terminal. 
Since it is (very) long to run all experiments, we also provide a notebook \texttt{Example.ipynb} at the following url: \url{https://github.com/MathieuCarriere/mrec}. This notebook illustrates the use of \texttt{MREC} on the dataset \texttt{synth}. 
%
Dependencies for running both codes are \texttt{numpy}, \texttt{scipy}, \texttt{POT}, \texttt{scikit-learn}, \texttt{cvxopt}, \texttt{matplotlib} and (optionally) the Matlab engine API for Python. 

\newpage
\begin{figure}
    \centering
    \includegraphics[width=2\textwidth/7]{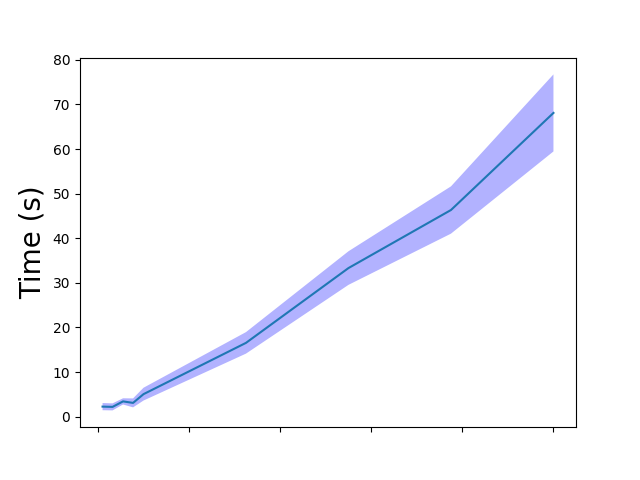}
    \includegraphics[width=2\textwidth/7]{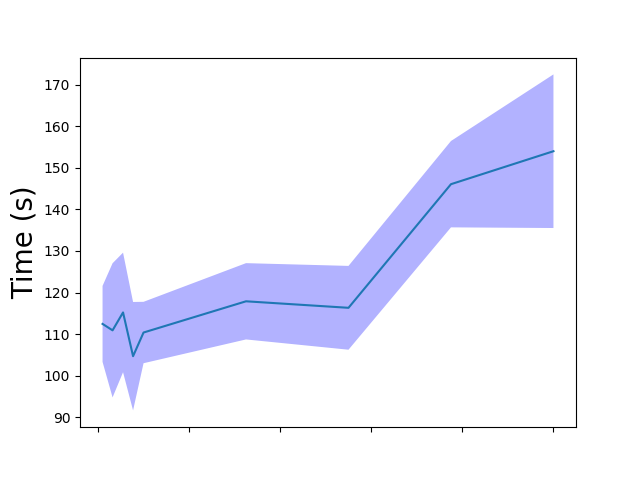}
    \includegraphics[width=2\textwidth/7]{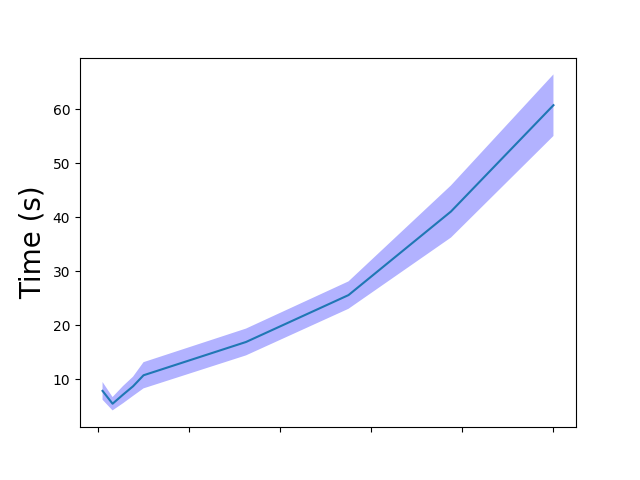}
    \includegraphics[width=2\textwidth/7]{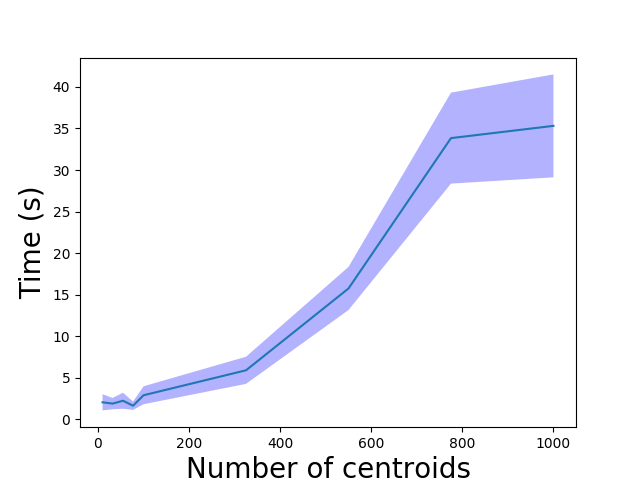}
    \includegraphics[width=2\textwidth/7]{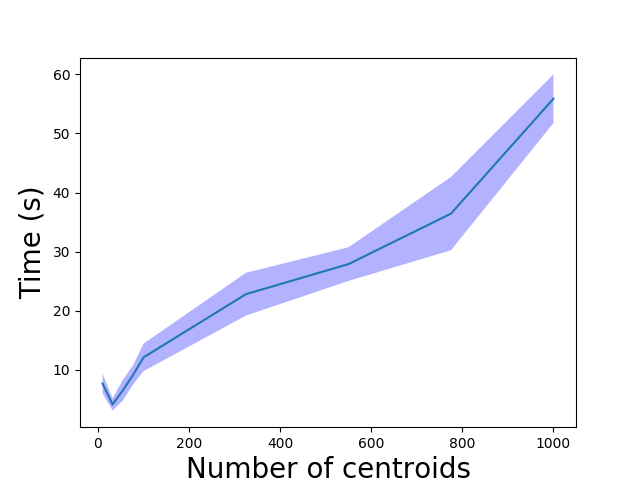}
    \includegraphics[width=2\textwidth/7]{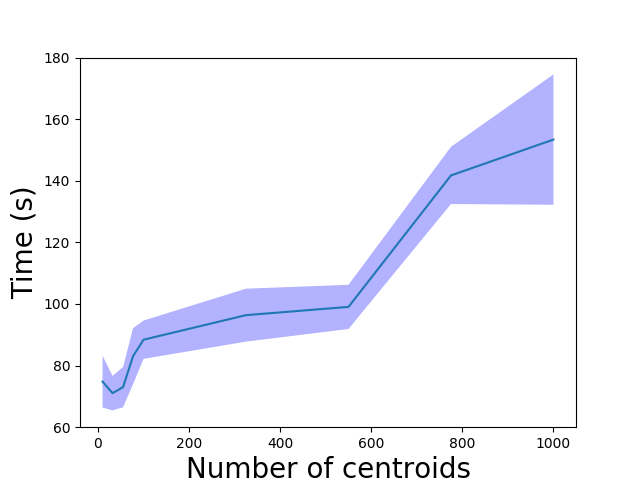}
    \caption{Mean and variance of the running time of the matchings computed by \GWREC\ (for the parameters inducing the best matching) w.r.t. the number of centroids used  for \texttt{Synth} (upper left), \texttt{Synth+} (upper middle), \texttt{PBMC} (upper right), \texttt{HEMA} (lower left), \texttt{BDTNP} (lower middle) and \texttt{BRAIN} (lower right).}
    \label{fig:time}
\end{figure}

\begin{figure}
    \centering
    \includegraphics[width=2\textwidth/5]{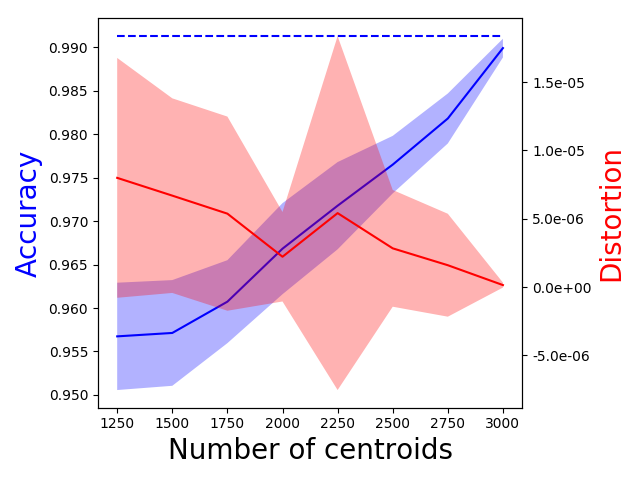}
    \includegraphics[width=2\textwidth/5]{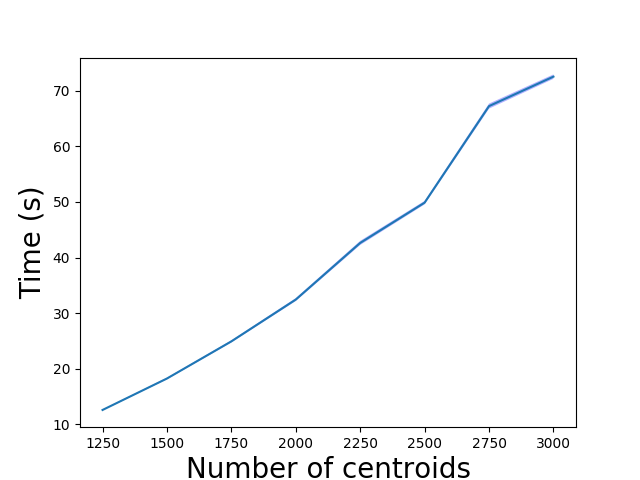}
    \caption{Mean and variance of the accuracy, distortion (left) and running time (right) of the matchings computed by \GWREC\ (for the parameters inducing the best matching) w.r.t. the number of centroids used  for \texttt{BDTNP}.}
    \label{fig:bdtnp+}
\end{figure}

\newpage
\begin{figure}
    \centering
    \includegraphics[width=2\textwidth/7]{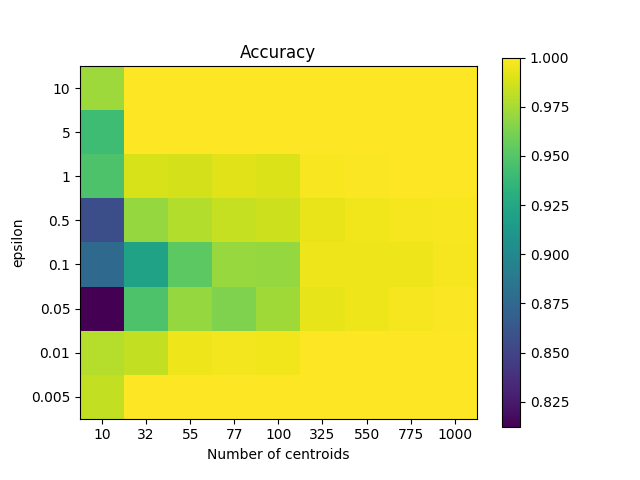}
    \includegraphics[width=2\textwidth/7]{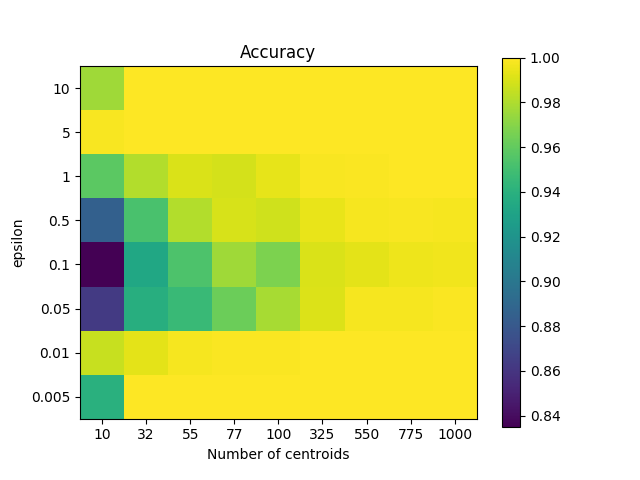}
    \includegraphics[width=2\textwidth/7]{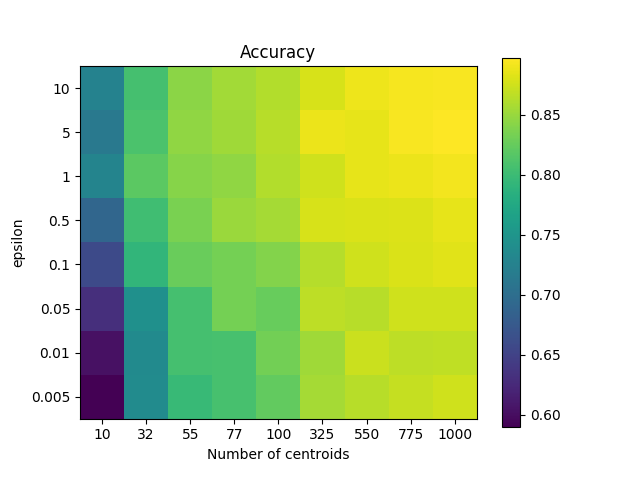}
    \includegraphics[width=2\textwidth/7]{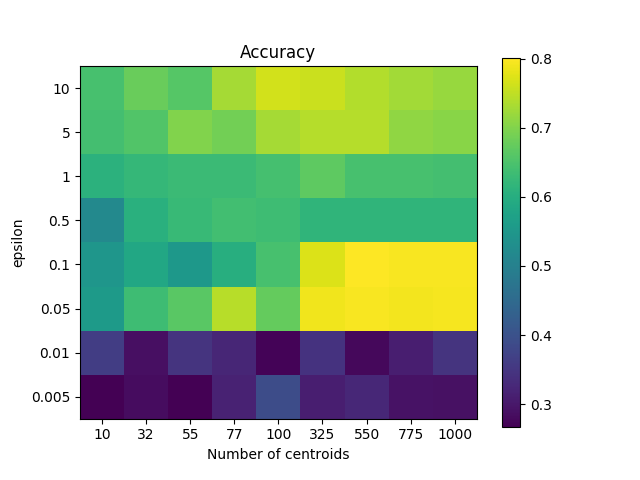}
    \includegraphics[width=2\textwidth/7]{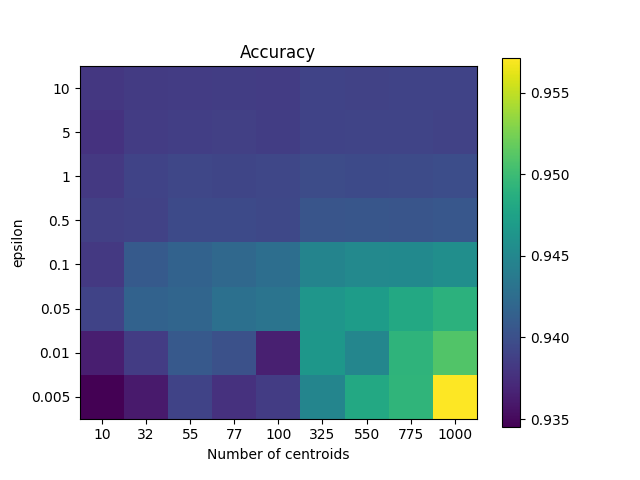}
    \includegraphics[width=2\textwidth/7]{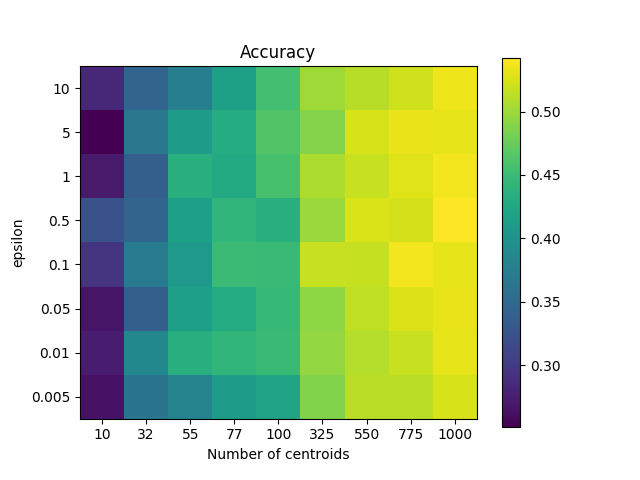}
    \includegraphics[width=2\textwidth/7]{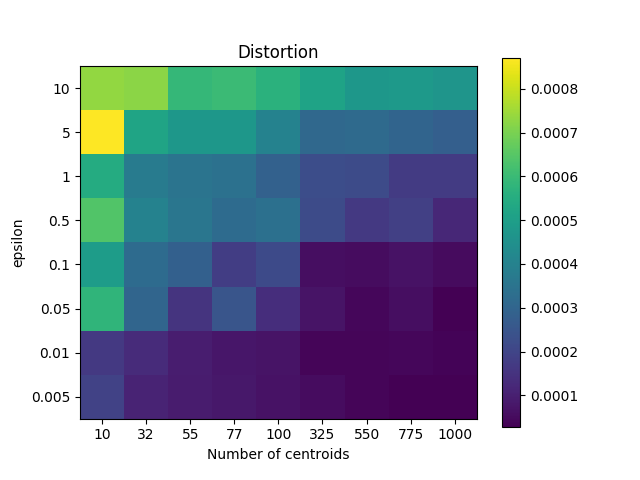}
    \includegraphics[width=2\textwidth/7]{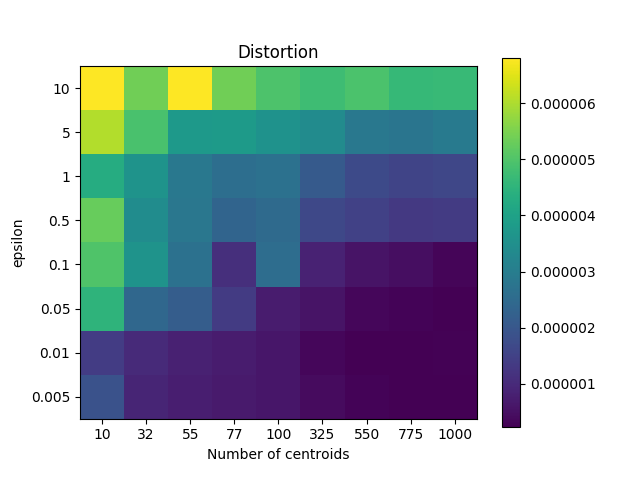}
    \includegraphics[width=2\textwidth/7]{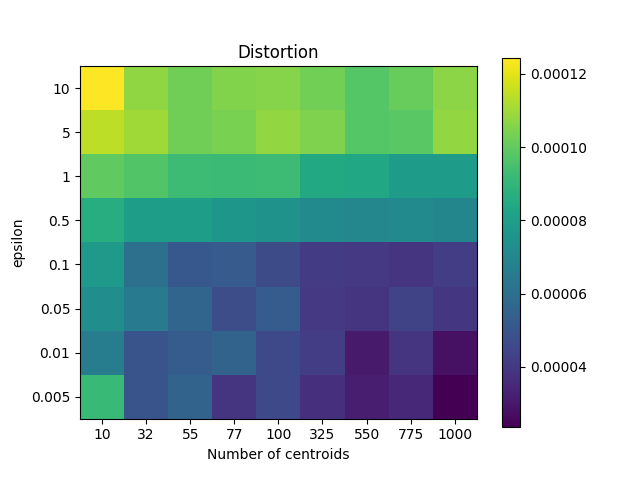}
    \includegraphics[width=2\textwidth/7]{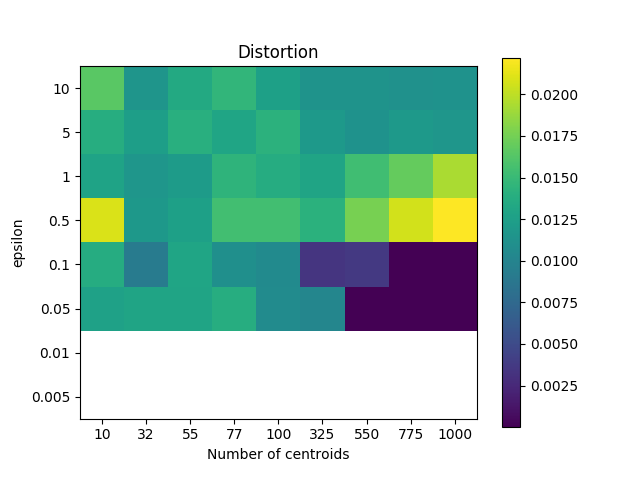}
    \includegraphics[width=2\textwidth/7]{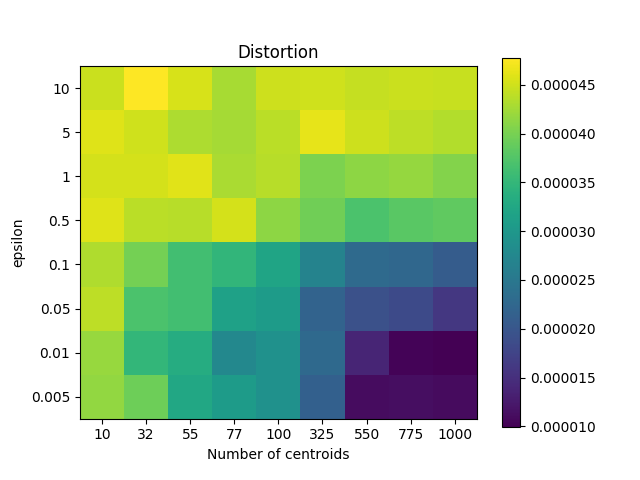}
    \includegraphics[width=2\textwidth/7]{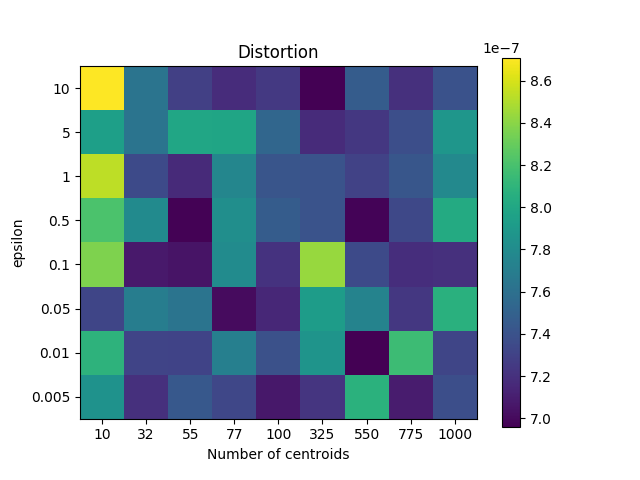}
    \includegraphics[width=2\textwidth/7]{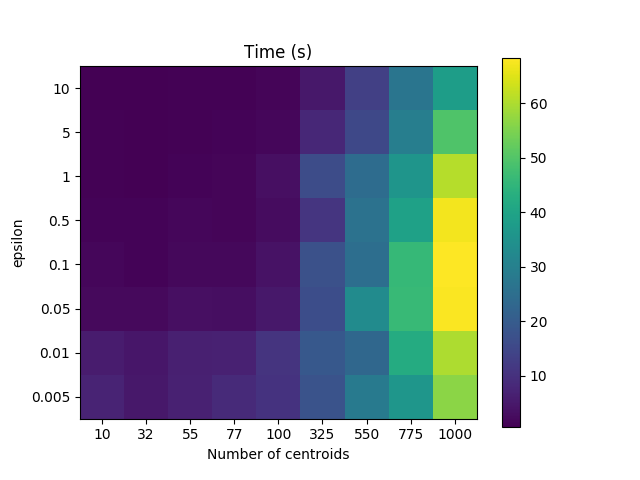}
    \includegraphics[width=2\textwidth/7]{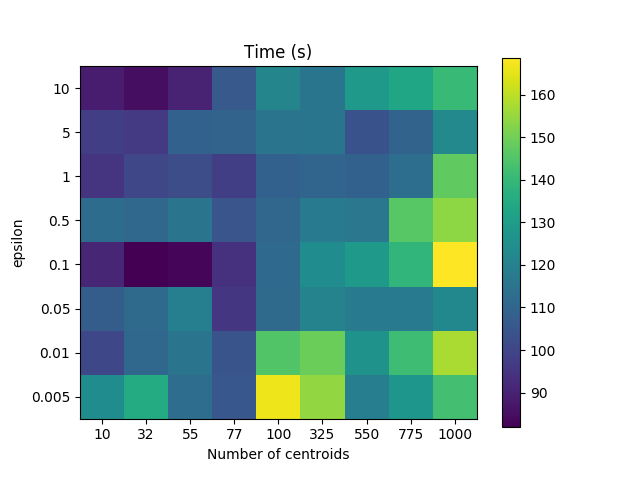}
    \includegraphics[width=2\textwidth/7]{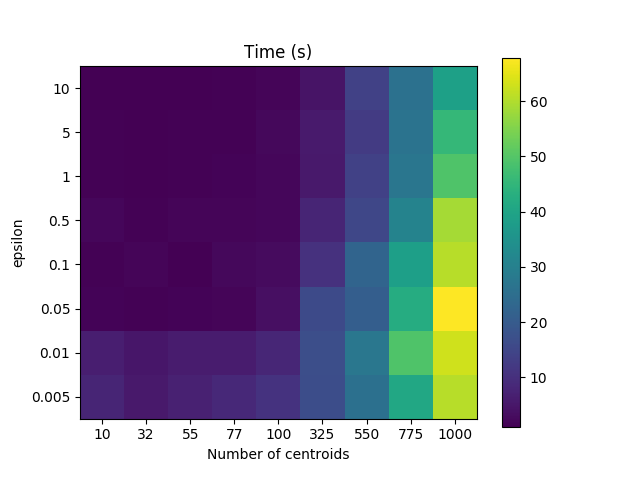}
    \includegraphics[width=2\textwidth/7]{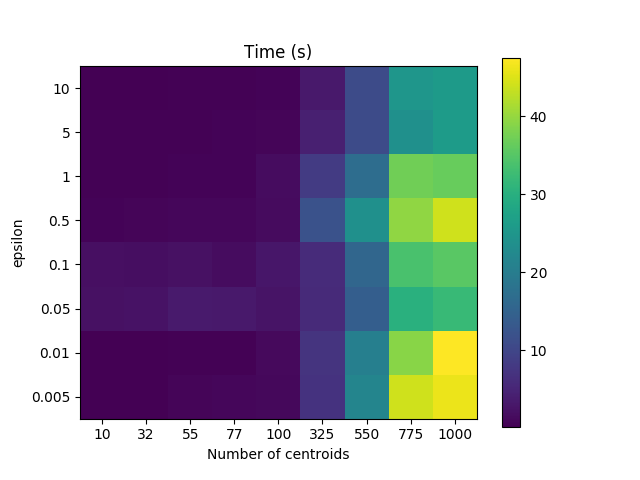}
    \includegraphics[width=2\textwidth/7]{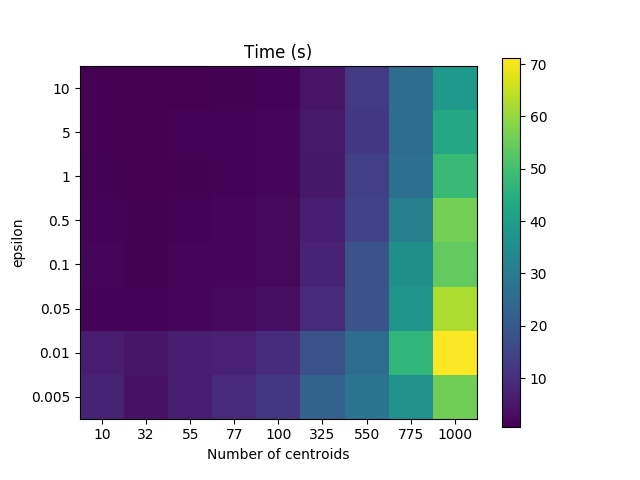}
    \includegraphics[width=2\textwidth/7]{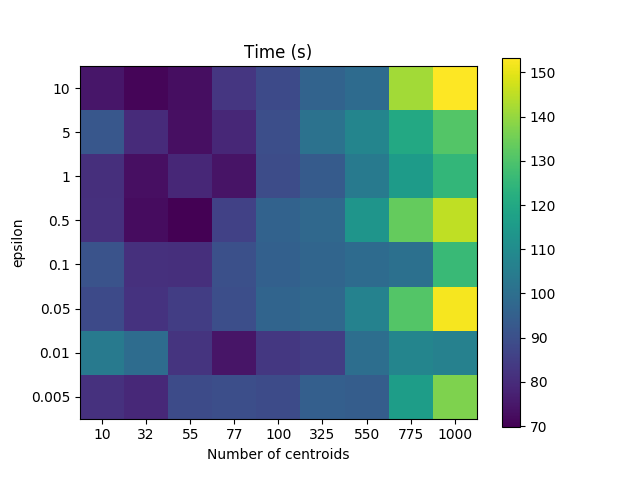}
    \caption{Example of parameter tuning for the $\epsilon$ entropy parameter for \texttt{Synth} (upper left), \texttt{Synth+} (upper middle), \texttt{PBMC} (upper right), \texttt{HEMA} (lower left), \texttt{BDTNP} (lower middle) and \texttt{BRAIN} (lower right.}
    \label{fig:params}
\end{figure}

\newpage
\begin{figure}[h!]
    \centering
    \includegraphics[width=6.5cm]{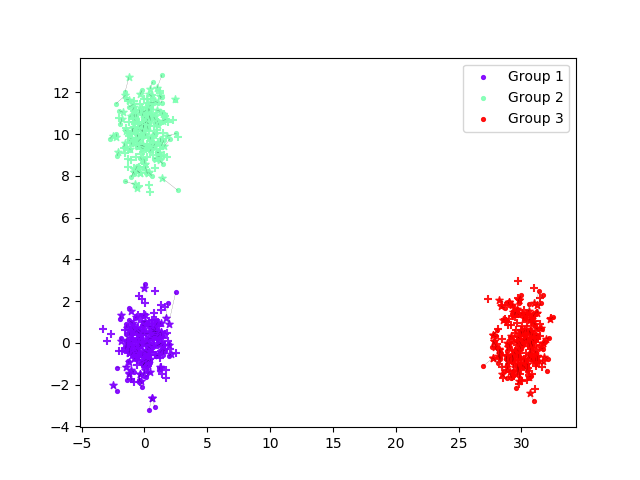}
    \includegraphics[width=6.5cm]{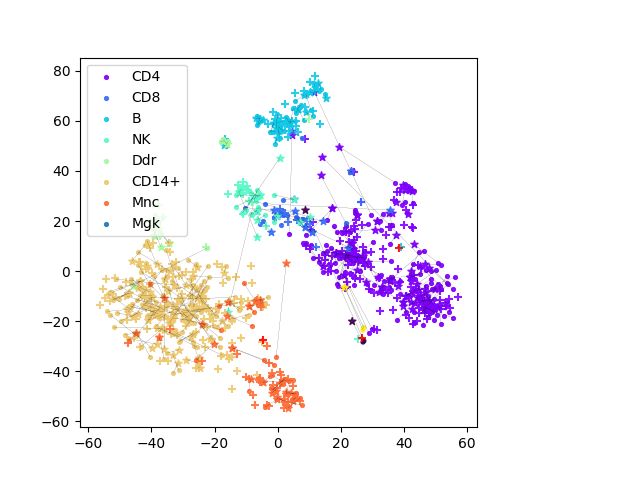}
    \includegraphics[width=6.5cm]{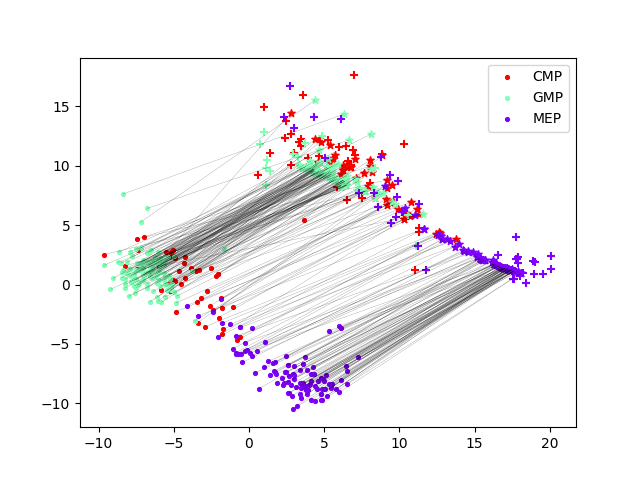}
    \includegraphics[width=6.5cm]{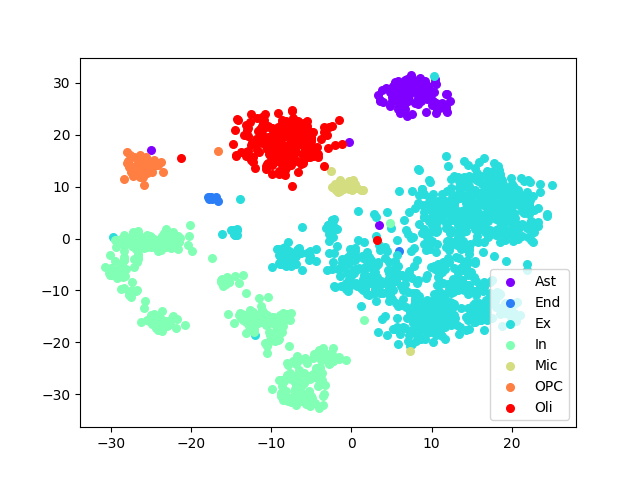}
    \caption{Upper left: Application of \GWREC\ on a synthetic data set of a mixture of Gaussians. The two input datasets are represented with dots and crosses, and \GWREC\ prediction (with 1,000 centroids) for each dot, that is, the weighted average of the second dataset computed with the probabilities given by \GWREC, is represented with a star. Upper right: Application of \GWREC\ on a data set of PBM cells. The two input datasets are represented with their first two principal components with dots and crosses, and \GWREC\ prediction (with 1,000 centroids) for each dot, that is, the weighted average of the second dataset computed with the probabilities given by \GWREC, is represented with a star. Lower left: Application of \GWREC\ on a data set of hematopoietic cells generated with the SMART and MARS protocols. The two input datasets are represented with their first two principal components with dots and crosses, and \GWREC\ prediction (with 1,000 centroids) for each dot, that is, the weighted average of the second dataset computed with the probabilities given by \GWREC, is represented with a star. Lower right: Application of \GWREC\ on a data set of human brain cells. Gene expression groups are visualized with the first two principal components.}
    \label{fig:synth}
\end{figure}

\end{document}